\newcommand{\E}{\mathbb{E}}
\newcommand{\Ind}{\textbf{1}}
\newcommand{\V}{\mathbb{V}}
\newcommand{\C}{\mathcal{C}}
\newcommand{\Ck}{\mathcal{C}_k}
\newcommand{\notCk}{\bar{\mathcal{C}}_k}
\newtheorem{theorem}{Theorem}
\newtheorem{proposition}[theorem]{Proposition}
\DeclareMathOperator*{\argmin}{arg\,min}
\begin{document}

\twocolumn[
\icmltitle{Rao-Blackwellized Stochastic Gradients for Discrete Distributions}




\begin{icmlauthorlist}
\icmlauthor{Runjing Liu}{a}
\icmlauthor{Jeffrey Regier}{b}
\icmlauthor{Nilesh Tripuraneni}{b}
\icmlauthor{Michael I. Jordan}{a,b}
\icmlauthor{Jon McAuliffe}{a,c}
\end{icmlauthorlist}

\icmlaffiliation{a}{Department of Statistics, University of California, Berkeley}
\icmlaffiliation{b}{Department of Electrical Engineering and Computer Sciences, University of California, Berkeley}
\icmlaffiliation{c}{The Voleon Group}

\icmlcorrespondingauthor{Runjing Liu}{runjing\_liu@berkeley.edu}

\icmlkeywords{Machine Learning, ICML}

\vskip 0.3in
]



\printAffiliationsAndNotice{}  

\begin{abstract}
We wish to compute the gradient of an expectation over a finite or countably infinite sample space having $K \leq \infty$ categories. When $K$ is indeed infinite, or finite but very large, the relevant summation is intractable. Accordingly, various stochastic gradient estimators have been proposed. In this paper, we describe a technique that can be applied to reduce the variance of any such estimator, without changing its bias---in particular, unbiasedness is retained. We show that our technique is an instance of Rao-Blackwellization, and we demonstrate the improvement it yields on a semi-supervised classification problem and a pixel attention task.

\end{abstract}

\section{Introduction}\label{sec:introduction}
Let $z$ be a discrete random variable over $K \leq \infty$ categories, with distribution $q_\eta(z)$ parameterized by a real vector $\eta$ and differentiable in $\eta$. We aim to minimize 
\begin{align}
\mathcal L(\eta) := \mathbb E_{z \sim q_\eta(z)} \left[ f_\eta(z) \right] = \sum_{k = 1}^K q_\eta(k) f_\eta(k),
\label{eq:objective_fun}
\end{align}
where the real-valued integrand $f_\eta$ also depends differentiably on $\eta$. If $K$ is finite and small enough, we can compute the exact gradient as 

\begin{align}
 \begin{split}
\nabla_\eta &\E_{q_\eta(z)}[f_\eta(z)]\\
&=  \sum_{k = 1}^K \Big\{\left[\nabla_\eta q_\eta(k) \right] f_\eta(k) 
 + q_\eta(k)\nabla_\eta  f_\eta(k) \Big\}.
\label{eq:analytic_gradient}
\end{split}
\end{align}

On the other hand, $K$ may be infinite, or large relative to the cost of evaluating $q_\eta \cdot f_\eta$. In either of these cases, which are the focus of this paper, the exact gradient is computationally intractable. Thus, in order to optimize $\mathcal L(\eta)$, we seek low-variance stochastic approximations of the gradient. 

The ``reparametrization trick''~\cite{SpallOptimization2003, kingma2014auto} provides efficient stochastic gradients when $q_\eta$ is a continuous distribution, but it does not apply when $z$ is discrete. Two well-known possibilities in the discrete case are continuous relaxation~\cite{maddison2017concrete,jang2017categorical} and REINFORCE~\cite{williams1992simple} (also known as the score function estimator). The former replaces the discrete random variable with a continuous relaxation so that the reparametrization trick can be applied. However, it results in biased gradient estimates. The latter is impractical for most purposes due to its high variance. 

Control variate methodology provides a general framework for variance reduction.  Specific examples include RELAX~\cite{grathwohl2018backpropagation}, REBAR~\cite{tucker2017rebar}, 
NVIL~\cite{mnih2014neural},  and MuProp~\cite{GuMuProp2015}.  These methods provide a mechanism for reducing the variance of REINFORCE, but unfortunately they do not reduce the variance enough for many applications.


In the current paper, we show how to achieve further variance reduction via a meta-procedure that can be applied to any discrete-distribution stochastic-gradient procedure (e.g., REINFORCE or REINFORCE with control variate).  Our framework reduces variance without changing the bias. In particular, an unbiased stochastic gradient remains unbiased after application of our approach. Further, our approach is ``anytime'' in the sense that it can reduce stochastic-gradient variances given any computational budget---the larger the budget, the greater the variance reduction. Hence it is well suited to our chosen setting, where $K$ is infinite or very large, and/or $q_\eta \cdot f_\eta$ is expensive to evaluate.

Our method is particularly apt in the setting where the probability mass $q_\eta(z)$ is concentrated on only a few categories. For example, in extreme classification, only a few labels out of many are plausible. In reinforcement learning, only a few actions in the possible action space are advantageous. Neither control-variate methods nor continuous-relaxation techniques take advantage of this ``sparsity,'' and we show that the variance reduction of our method in this setting can be dramatic. 

We show that our variance-reduction meta-procedure is an instance of a general statistical method called Rao-Blackwellization~\cite{casella1996rao}. Rao-Blackwellization has been used in previous work to reduce the variance of stochastic gradients~\cite{bbVI, TitsiasLocalExpGrads}, but in a setting orthogonal to ours, one with multivariate discrete random variables. Our focus here is on a univariate discrete random variable with many categories. Our method can be applied in conjunction with the former work to extend to the case of multivariate discrete random variables, each with a large number of categories. This extension is not discussed in the present work, and we leave it as an avenue of future exploration. 


The paper is organized as follows. We present our variance-reduction procedure in Section~\ref{sec:methods} and make the connection to Rao-Blackwellization in Section~\ref{sec:theory}, demonstrating that our technique necessarily reduces stochastic-gradient variances. In Section 4 we discuss related work. In Section 5, we exhibit the benefits of our procedure on synthetic data, a semi-supervised classification problem, and a pixel attention task. We conclude in Section 6.

\section{Method}\label{sec:methods}
We consider the situation where the number of categories $K$ is infinite, or very large in the sense that computing the exact gradient in Equation \eqref{eq:analytic_gradient} is intractable. One possible stochastic estimator for the gradient is the REINFORCE estimator, 
\begin{align}
    f_\eta(z) \nabla_\eta \log q_\eta(z) + \nabla_\eta f_\eta(z) \quad 
    z\sim q_\eta(z),
    \label{eq:reinforce_estimate}
\end{align}
which one can check is unbiased for the true gradient in Equation \eqref{eq:analytic_gradient}. 

In practice, the REINFORCE estimator often has variance too large to be useful. Control variates have been proposed to decrease the variance of the REINFORCE estimator. The key observation is that the score function $\nabla_\eta \log q_\eta(z)$ has zero expectation under $q_\eta(z)$, so 
\begin{align}
    [f_\eta(z) - C]\nabla_\eta \log q_\eta(z)  + \nabla_\eta f_\eta(z) \quad z \sim q_\eta(z)
    \label{eq:reinforce_w_control_estimate}
\end{align}
is still unbiased for the true gradient. Several proposals have been put forth for choosing $C$ to reduce the variance~\cite{mnih2014neural,GuMuProp2015,tucker2017rebar}. 

In this paper, we present a meta-procedure that can be applied to any stochastic estimator for the gradient of a discrete expectation obtained by sampling from $q_\eta(z)$. Let $g(z)$ 
be any such estimator which is unbiased\footnote{Our technique applies to biased estimators as well. For concreteness, we focus on the unbiased case.}, i.e., satisfies $\E_{q_{\eta}(z)}[g(z)] = \nabla_{\eta} \E_{q_{\eta}(z)} [f_\eta(z)]$. An example is the REINFORCE estimator. We decompose the expectation $\E_{q_{\eta}(z)}[g(z)]$ into two components: one containing the high-probability atoms of $q_\eta$, and one containing the remaining atoms. We compute the exact contribution of the high-probability component to the expectation, and we use a stochastic estimator for the other component. The idea comes from observing that in many applications, $q_\eta(z)$ only puts significant mass on a few categories. If $g(z)$ is reasonably well behaved over $z$, then $q_\eta(z)g(z)$ is large when $q_\eta(z)$ attains its largest values and smaller elsewhere. By computing the high-probability component of the expectation exactly, we obtain a value already close to correct. A stochastic estimator is then added to correct, on average, for what error remains. 




Formally, let $\mathcal{C}_k$ be the set of $z$ such that $q_\eta(z)$ assumes one of its $k$ largest values. Ties may be broken arbitrarily. Let $\bar{\C}_k$ denote the complement of $\C_k$. Then
\begin{align}
 \nabla_\eta &\E_{q_\eta(z)}[f_\eta(z)] = \E_{q_\eta(z)}[g(z)] \label{eq:unbiasedness}\\
 &= \E_{q_\eta(z)}[g(z) \Ind\{z \in \C_k\} + g(z) \Ind\{z \in \bar{\C}_k\}] \\
&= \sum_{z \in \C_k}q_\eta(z) g(z) + \E_{q_\eta(z)}[g(z)\Ind\{z \in \bar{\C}_k\}].
\label{eq:partial_marg_decomp}
\end{align}
It remains to approximate the expectation in the second term. We use an importance-sampling approximation based on a single draw from an importance distribution.
We choose a simple importance distribution: the distribution of $q_\eta$ conditional on the event $\notCk$. We denote this importance distribution by $q_\eta|_{ \bar{\C}_k}$. By construction, the importance weighting function is identically equal to $q_\eta(\notCk)$, regardless of which $z \sim q_\eta|_{ \bar{\C}_k}$ we draw. (Note that the indicator inside the second term of~\eqref{eq:partial_marg_decomp} always equals one, because we are only sampling from $z\in \notCk$.)

Our estimator assumes that, given $k$, the set $\C_k$ can be identified at little cost. This certainly holds in the case of inference: using variational Bayes, $q(z)$ is a variational approximate posterior chosen from a set we designate. 

In summary, we estimate the gradient as
\begin{align}
    & \hat g(v) = \sum_{z \in \Ck} q_{\eta}(z) g(z) + q_{\eta}(\notCk) g(v) \label{eq:rb_estimate} \\
    & \quad v \sim q_\eta|_{\notCk} \notag,
\end{align}
which also satisfies $\E_{v}[\hat g (v)] = \nabla_{\eta} \E_{q_{\eta}(z)} [f_\eta(z)]$.

We see that the first term of this estimator is deterministic and the second term is random, but scaled by $q_\eta(\notCk)$, which is small when $q_\eta$ is concentrated on a small number of atoms. Therefore, we intuitively expect this estimator to have smaller variance than the original estimator, $g(z)$. 

In the next section, we confirm this intuition by interpreting the construction of the estimator $\hat g(v)$ as Rao-Blackwellization (which always reduces variance). Hence, we call $\hat g(v)$ the {\itshape Rao-Blackwellized gradient estimator}. 


\section{Theory}\label{sec:theory}
We begin by describing how a suitable representation of the original discrete variable $z\sim q_\eta(z)$ allows us to interpret our estimator as an instance of Rao-Blackwellization. Let $q_\eta|_{\Ck}$ denote the distribution of $q_\eta$ conditional on the event $C_k$. Consider the three independent random variables
\begin{align}
    u & \sim q_\eta|_{\Ck}, \\
    v & \sim q_\eta|_{\notCk}, \label{eq:cond_not} \\
\text{and\ \ } b & \sim \textrm{Bernoulli}\left( q_\eta(\notCk)\right). \label{eq:ind_rv}
\end{align}
The triplet $(u, v, b)$ provides a distributionally equivalent representation of $z$:
\begin{align}
    T(u, v, b) \stackrel{d}{=} z,
\end{align}
where
\begin{align}
    T(u, v, b) := u^{1-b} v^b.
\end{align}

The estimator in Equation~\eqref{eq:rb_estimate} can then be written as
\begin{align}
    \hat g(v) = \E \left[ g(T(u, v, b)) | v \right], \label{eq:rbequiv}
\end{align}
where $g(z)$ is the original unbiased gradient estimator. To see this, break the right-hand side of~\eqref{eq:rbequiv} into two terms according to the value of $b$, then simplify. Equation~\eqref{eq:rbequiv} demonstrates directly that our estimator is an instance of Rao-Blackwellization. As such, it has the same expectation as the original estimator $g(z)$, a fact about Rao-Blackwellization that follows immediately from iterated expectation. In particular, if $g(z)$ is unbiased as we have assumed, so too is our estimator.

An application of the conditional variance decomposition
gives
\begin{align}
     \V\left[ g(z) \right] = &\V\left[ \hat g(v) \right]
     + \E\left\{ \V \left[ g(T(u, v, b)) | v \right] \right\}, \label{eq:rb}
\end{align}
showing that $\hat g$ has lower variance than $g$, by at least as much as the last term in Equation~\eqref{eq:rb}. This too is a standard result about Rao-Blackwellization.

Proposition~\ref{prop:var_red} further quantifies this variance reduction, showing the variance of $\hat g(v)$ must be less then the variance of $g(v)$ by the multiplicative factor $q_{\eta}(\notCk)$.
\begin{proposition}
    Let $g(z)$ be an unbiased gradient estimator as in Equation~\eqref{eq:unbiasedness} and $\hat g(v)$ denote the Rao-Blackwellized estimator defined in Equation \eqref{eq:rb_estimate}. Then
    \begin{align}
        \V[\hat g(v)] \leq q_{\eta}(\notCk) \V[g(z)].
    \end{align}
    \label{prop:var_red}
\end{proposition}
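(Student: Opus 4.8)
The plan is to compute the variance of $\hat g(v)$ exactly and then control it using the conditional variance decomposition. First I would observe that in Equation~\eqref{eq:rb_estimate} the deterministic sum $\sum_{z\in\Ck} q_\eta(z) g(z)$ is constant with respect to the single random draw $v \sim q_\eta|_{\notCk}$, so only the second term contributes to the variance. Writing $p := q_{\eta}(\notCk)$ and using that $v \sim q_\eta|_{\notCk}$ has the same law as $z$ conditioned on the event $\notCk$, this yields the exact identity
\begin{align}
\V[\hat g(v)] = p^2\, \V\!\left[g(z)\mid z \in \notCk\right].
\end{align}
The whole proposition then reduces to the inequality $p\, \V[g(z)\mid z \in \notCk] \le \V[g(z)]$, since multiplying through by $p$ reproduces the claimed factor $q_{\eta}(\notCk)$.

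Next I would establish that inequality from the law of total variance applied to $g(z)$, conditioning on the indicator $b := \Ind\{z \in \notCk\}$, which is Bernoulli with parameter $p$ under $z \sim q_\eta$. The decomposition reads
\begin{align}
\V[g(z)] = \E_b\!\left[\V[g(z)\mid b]\right] + \V_b\!\left[\E[g(z)\mid b]\right].
\end{align}
The between-group term $\V_b[\E[g(z)\mid b]]$ is nonnegative, and the within-group term expands as $(1-p)\,\V[g(z)\mid z\in\Ck] + p\,\V[g(z)\mid z\in\notCk]$. Dropping the nonnegative between-group term and the nonnegative contribution from $z\in\Ck$ leaves exactly $\V[g(z)] \ge p\,\V[g(z)\mid z\in\notCk]$. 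Combining this with the exact identity from the first step finishes the argument.

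The main point requiring care is that $g$ is vector-valued (it is a gradient), so $\V[\cdot]$ must be read either as a covariance matrix or componentwise. If $\V$ denotes the full covariance matrix, every step above remains valid as a Loewner (positive-semidefinite) inequality, because both the between-group covariance and the within-$\Ck$ covariance are PSD and the scalar factors $p$ and $p^2$ preserve the ordering; if instead $\V$ is the trace of the covariance, or is read coordinate by coordinate, the scalar argument applies term by term and sums. A trivial edge case is $q_{\eta}(\notCk) = 0$, where $\hat g$ is deterministic and both sides vanish, so the bound holds automatically.
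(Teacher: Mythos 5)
Your proof is correct and follows essentially the same route as the paper: both establish the exact identity $\V[\hat g(v)] = q_\eta(\notCk)^2\,\V[g(z)\mid z\in\notCk]$ and then bound $q_\eta(\notCk)\,\V[g(z)\mid z\in\notCk] \le \V[g(z)]$ via the conditional variance decomposition with the Bernoulli indicator $b = \Ind\{z\in\notCk\}$, dropping the same nonnegative terms. Your added remarks on the vector-valued (PSD/componentwise) reading of $\V$ and the degenerate case $q_\eta(\notCk)=0$ are careful touches the paper leaves implicit, but they do not change the argument.
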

\begin{proof}
\vspace{-0.2in}
We apply the conditional variance decomposition.
Let $\epsilon = q_{\eta}(\notCk)$ and recall the Bernoulli random variable $b$ defined in Equation~\eqref{eq:ind_rv}. First,
\begin{align}
    \V[g(z)]
    &= \E[\V[g(z) | b]] + \V[\E[g(z) | b]] \\
    &\geq \E[\V[g(z) | b]] \\
    &= \epsilon \V[ g(z) | z \in \notCk] + (1-\epsilon) \V[ g(z) | z \in \Ck] \notag \\
    &\geq \epsilon \V[g(z) | z \in \notCk]. \notag
\end{align}
But $\V[\hat g(v)] = \epsilon^2 \V[g(z) | z \in \notCk]$, which in combination with the above yields the result.
\end{proof}

The multiplicative factor of variance reduction guaranteed by  Rao-Blackwellization can be significant if the probability mass of $q_{\eta}(z)$ is concentrated on just a few categories. But while Rao-Blackwellization reduces the variance of $g(z)$, this comes at the cost of evaluating $g(z)$ a total $k+1$ times (cf.\ Equation~\eqref{eq:rb_estimate}). An initial stochastic gradient $g(z)$ such as REINFORCE will only require a single evaluation of $g$.

There is an alternative approach to reducing the variance of an initial estimator $g(z)$ via multiple evaluations of $g(z)$: minibatching, i.e., simple Monte-Carlo averaging over independent draws of $z$. Thus, the question arises: given a budget of $N < K$ evaluations of $g(z)$, is it better to Rao-Blackwellize or minibatch? Computationally, our method is parallelizable in the same way that minibatching is parallelizable. The next proposition shows constructively that there is a choice of $k \leq N$ for which Rao-Blackwellization reduces variance at least as much as minibatching.
\begin{proposition}
Suppose we have a budget of $N$ evaluations of $g$. Consider the estimators 
\begin{align}
& \hat g_{N, k}(v) := \sum_{u \in \C_k} q_\eta(u)g(u) + \frac{q_\eta(\bar{\C}_k)}{N-k}\sum_{j=1}^{N-k} g(v_j), \\
&\quad v_1, ..., v_{N-k}  \overset{iid}{\sim} q_{\eta} |_{\notCk} \notag
\end{align}
and
\begin{align}
    & g_N(z) := \frac{1}{N} \sum_{j=1}^N g(z_j), \quad z_1, ..., z_N \overset{iid}{\sim} q_{\eta}.
\end{align}
If we choose
\begin{align}
\hat k = \argmin_{k \in \{0, \hdots, N \} } \frac{q_\eta(\bar{\C}_k)}{N-k}
\end{align}
then $\V[\hat g_{N, \hat{k}}(v)] \leq \V[g_N(z)]$. 
\label{prop:var_compare}
\end{proposition}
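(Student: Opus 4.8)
The plan is to compute both variances in closed form up to the common factor $\V[g(z)]$, and then show that the minimizing choice $\hat k$ makes the Rao-Blackwellized variance no larger than the minibatch variance. First I would dispose of the minibatch estimator: since $z_1,\dots,z_N$ are iid draws from $q_\eta$ and $g_N$ is their average, independence gives $\V[g_N(z)] = \tfrac{1}{N}\V[g(z)]$ at once.

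Next I would compute $\V[\hat g_{N,k}(v)]$ for an arbitrary $k$. Writing $\epsilon_k := q_\eta(\notCk)$, the first sum $\sum_{u\in\Ck} q_\eta(u)g(u)$ is deterministic and contributes no variance, while the second term is $\tfrac{\epsilon_k}{N-k}$ times an average of $N-k$ iid draws $g(v_j)$ with $v_j\sim q_\eta|_{\notCk}$. Independence then yields
\[
\V[\hat g_{N,k}(v)] = \frac{\epsilon_k^2}{N-k}\,\V[g(z)\mid z\in\notCk].
\]

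The step doing the real work is bounding the conditional variance $\V[g(z)\mid z\in\notCk]$ by the unconditional one. Here I would reuse verbatim the conditional variance decomposition from the proof of Proposition~\ref{prop:var_red}: conditioning on the Bernoulli indicator $b$ of the event $\{z\in\notCk\}$ and discarding the nonnegative $\V[\E[g(z)\mid b]]$ term gives $\V[g(z)] \geq \epsilon_k\,\V[g(z)\mid z\in\notCk]$, i.e.\ $\V[g(z)\mid z\in\notCk]\leq \epsilon_k^{-1}\V[g(z)]$. Substituting into the previous display produces the clean per-$k$ bound $\V[\hat g_{N,k}(v)] \leq \tfrac{\epsilon_k}{N-k}\V[g(z)] = \tfrac{q_\eta(\notCk)}{N-k}\V[g(z)]$, valid for every $k\in\{0,\dots,N\}$.

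Finally I would close the argument by noting that the minimized coefficient never exceeds $1/N$: at $k=0$ we have $\Ck=\emptyset$, hence $q_\eta(\bar{\C}_0)=1$ and $\tfrac{q_\eta(\bar{\C}_0)}{N}=\tfrac1N$, so minimizing over $k$ forces $\tfrac{q_\eta(\bar{\C}_{\hat k})}{N-\hat k}\leq \tfrac1N$. Combining this with the per-$k$ bound gives $\V[\hat g_{N,\hat k}(v)] \leq \tfrac{q_\eta(\bar{\C}_{\hat k})}{N-\hat k}\V[g(z)] \leq \tfrac1N\V[g(z)] = \V[g_N(z)]$. I expect the only genuinely delicate point to be the conditional-variance bound of the third step — making sure the decomposition is applied to the event $\{z\in\notCk\}$ and that the leftover term is correctly dropped — whereas the variance-of-an-average computations are routine, and recognizing $k=0$ as the natural comparison point is precisely what makes the $\argmin$ choice automatically competitive with minibatching.
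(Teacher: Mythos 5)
Your proof is correct and follows essentially the same route as the paper: compute $\V[g_N(z)] = \tfrac{1}{N}\V[g(z)]$, bound $\V[\hat g_{N,k}(v)] \leq \tfrac{q_\eta(\bar{\C}_k)}{N-k}\V[g(z)]$, and observe that $k=0$ already achieves the coefficient $\tfrac{1}{N}$, so the $\argmin$ can only do better. The only difference is that the paper delegates the per-$k$ bound to a citation of Proposition~\ref{prop:var_red}, whereas you re-derive it explicitly for the $(N-k)$-sample version via the conditional variance decomposition --- a worthwhile bit of added care, since Proposition~\ref{prop:var_red} as stated covers only the single-draw estimator.
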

\begin{proof}
\vspace{-0.1in}
    Let $V_1 = \V[g_{1}(z)]$. Then $\V[g_N(z)] = (1/N) V_1$, while Proposition \ref{prop:var_red} shows that $\V[\hat g_{N, k}(v)] \leq \frac{q_\eta(\bar{\C}_k)}{N-k} V_1$. Since $\frac{q_{\eta}(\bar{\C}_k)}{N-k} = \frac{1}{N}$ when $k=0$, the result follows.
\end{proof}
Together, Propositions \ref{prop:var_red} and \ref{prop:var_compare} imply the following:
\begin{itemize}
    \item Rao-Blackwellization leads to a significant variance reduction if the mass of $q_{\eta}(z)$ is concentrated.
    \item Even when restricting minibatched versions of the initial and Rao-Blackwellized estimators to an equal number of evaluations of $g$, Rao-Blackwellization yields equal or lower variance, for a computable choice of $k$.
\end{itemize}

\section{Related Work}\label{sec:related_work}
Methods to reduce the variance of stochastic gradients for discrete distributions generally fall into two broad categories: continuous relaxation methods and control variate methods. 

In the first category, the Concrete distribution \cite{maddison2017concrete} approximates the discrete random variable with a reparametrizable continuous random variable so that the standard reparametrization trick can be applied. While this continuous relaxation reduces the variance of the stochastic gradient, the resulting estimators are biased. Thus the Gumbel-softmax procedure \cite{jang2017categorical} introduced an annealing step into the optimization whereby the continuous relaxation converges towards the discrete random variable as the optimization path moves forward. 

In the second category, control variate methods include black-box variational inference (BBVI) \cite{bbVI}, NVIL \cite{mnih2014neural}, DARN \cite{GregorDARN2014}, and MuProp \cite{GuMuProp2015}. BBVI uses multiple samples at each step to estimate the `optimal' control variate. NVIL introduces an observation dependent control variate learned by a separate neural network. DARN uses a Taylor expansion of $f_\eta(z)$ to compute a control variate, but this results in a biased estimator; MuProp proposes an estimate of this bias and corrects it. 

Finally, RELAX~\cite{grathwohl2018backpropagation} and REBAR~\cite{tucker2017rebar} are a combination of the two broad methods and use a continuous relaxation to construct a control variate. 

Section 5 compares both continuous relaxation and control variate methods to our Rao-Blackwellization.  

A Rao-Blackwellization procedure for gradient estimation was also applied in BBVI and ``local expectation gradients"~\cite{TitsiasLocalExpGrads}, but for a different purpose. In their setting, the expectation is decomposed over a multivariate (discrete or continuous) random variable using iterated expectation. BBVI approximates each conditional expectation by sampling (with a control variate), while local expectation gradients compute each conditional expectation analytically. This Rao-Blackwellization is orthogonal to our approach: while they consider multiple discrete random variables, our approach focuses on a univariate discrete with many categories. 

The process of summing out a few terms and sampling the remainder for gradient estimation has appeared in the context of reinforcement learning~\cite{TitsiasMCwithExhaustiveSearch, liangMAP2018}, though to our knowledge we are the first to make the connection with Rao-Blackwellization. In MAPO~\cite{liangMAP2018}, a procedure to create a memory
buffer of trajectories for policy optimization, the terms with high rewards (or small loss) are kept and summed. In contrast, we choose to sum terms with high probability. In our setting, it is the loss $f_\eta(z)$, not the probability, $q_\eta(z)$, that is expensive to evaluate for all categories $z$. 

Finally, the problem of having a large number of categories also manifests in language models, and methods such as noise contrastive estimation \cite{GutmannNCE2010} and hierarchical softmax \cite{MorinHierSoftmax2005} have been introduced. However, these methods are applied when the normalizing constant for $q_\eta(z)$ is intractable. In our work, we restrict ourselves to scenarios where $q_\eta(z)$ is normalized.


\section{Experiments}\label{sec:results}

In our experiments, we will consider applying the Rao-Blackwellization procedure to either the REINFORCE estimator, 
\begin{align}
\begin{split}
    g(z) = f_\eta(z) \nabla_\eta \log q_\eta(z) + \nabla_\eta f_\eta(z), \\\quad z\sim q_\eta(z),
    \label{eq:reinforce_no_bl}
\end{split}
\end{align}
or REINFORCE with a control variate $C$, 
\begin{align}
\begin{split}
    g(z) = [f_\eta(z) - C]\nabla_\eta \log q_\eta(z) + \nabla_\eta f_\eta(z), \\
    z\sim q_\eta(z). \label{eq:reinforce_simple_bl}
\end{split}
\end{align}
A simple choice of control variate that works well in practice is to take $C = f_\eta(z')$ for an independent draw $z' \sim q_\eta$. We abbreviate this estimator as REINFORCE$^+$.

Note that in both REINFORCE and REINFORCE$^+$, $g(z)$ is unbiased for the true gradient. (In the second case, $g(z)$ is unbiased conditional on $z'$, and hence unconditionally unbiased as well.)

\subsection{Bernoulli latent variables}\label{sec:toy_example}
We fix a vector $p = [0.6, 0.51, 0.48]^\top$ and seek to minimize the loss function
\begin{align}
\E_{b_1, b_2, b_3 \overset{iid}{\sim} \text{Bern}(\sigma(\eta))} \Big\{\sum_{i=1}^3 (b_i - p_i)^2\Big\}
\label{eq:bern_exp_loss}
\end{align}
over $\eta\in\mathbb{R}$, where $\sigma(\eta)$ is the sigmoid function. Here, the discrete random vector $b = [b_1, b_2, b_3]^\top$ is supported over $K=2^3 = 8$ categories. The optimal value of $\sigma(\eta)$ is $1$, approached as $\eta \to \infty$. 

Figure~\ref{fig:bernoulli_optim_paths} shows the performance of Rao-Blackwellizing REINFORCE and REINFORCE$^+$. We initialized $\eta$ at $\eta = -4$, so the sampling distribution has large mass at $b = (0, 0, 0)$. The optimal distribution on the other hand should put all mass at $b = (1, 1, 1)$. In other words, we initialized the optimization procedure such that the mass is concentrated on the wrong point. The Rao-Blackwellized gradient is therefore initially slightly slower than the original gradient, since we are analytically summing the wrong category. However, Rao-Blackwellization improves the performance of both gradient estimators at the end of the path. 

\begin{figure}[tb]
    \centering
     \begin{subfigure}[b]{0.4\textwidth}
        \includegraphics[width=\textwidth]{./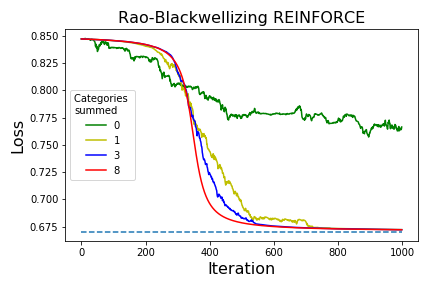}
    \end{subfigure}
    \begin{subfigure}[b]{0.4\textwidth}
        \includegraphics[width=\textwidth]{./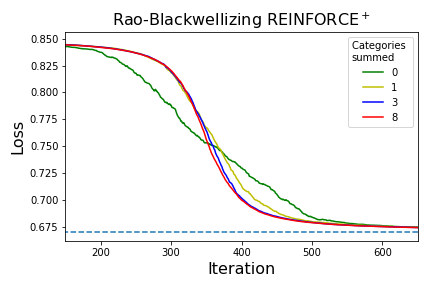}
    \end{subfigure}
    \vspace{-0.19in}
    \caption{The loss function at each iteration in the Bernoulli experiments. Each line is an average over 20 trials from the same initialization. Zero categories summed is the original estimator, while eight categories summed returns the exact gradient. }
    \label{fig:bernoulli_optim_paths}
\end{figure}

Figure \ref{fig:bernoulli_grad_vars} shows the variances of the gradient estimates at $\eta = 0$ and $\eta = -4$, as a function of $k$, the categories analytically summed. As expected, the variance decreases as more categories are analytically summed. At $\eta = 0$, the corresponding $q_\eta$ distribution is uniform, i.e., maximally anti-concentrated, so the variance reduction of Rao-Blackwellization is not large. However, the gains are quite substantial at $\eta = -4$, where $q_\eta$ is concentrated around the point $b = (0, 0, 0)$. In this case, analytically summing out one category removes nearly all the variance. 

\begin{figure}[tb]
    \centering
    \begin{subfigure}[b]{0.4\textwidth}
      \includegraphics[width=\textwidth]{./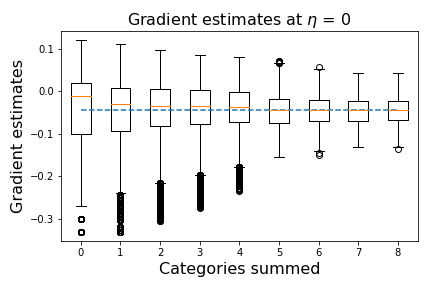}
    \end{subfigure}\\
    \begin{subfigure}[b]{0.4\textwidth}
        \includegraphics[width=\textwidth]{./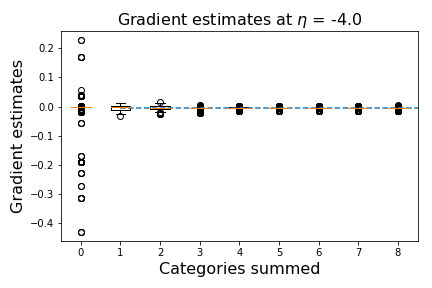}
    \end{subfigure}
    \caption{The distribution of gradient estimates from REINFORCE$^+$ in the Bernoulli experiments. We examine the gradients at $\eta = 0$ and $\eta = -4$, as a function of $k$, the number of categories summed. Summing out categories reduces variance. The reduction is large at $\eta=-4$ where the variational distribution is concentrated on just one category. (Note there is still some random noise when we sum out all 8 categories here, because of the random control variate.)}
    \label{fig:bernoulli_grad_vars}
\end{figure}

\subsection{Gaussian mixture model }\label{sec:gmm_results}
For our next experiment, we draw $N=200$ observations $(y_n)$ from a $d$-dimensional Gaussian mixture model with $K=10$ components, taking $d=2$. 
\begin{align}
	z_n &\stackrel{\text{iid}}{\sim} \text{Categorical}(\pi_{1:K}), \ n = 1, \dots, N, \\
	\mu_k & \stackrel{\text{iid}}{\sim} \mathcal{N}(0, \sigma_0^2 I_{d \times d}), \ k = 1, \dots, K, \\
	y_n | z_n, \mu & \stackrel{\text{iid}}{\sim} \mathcal{N}(\mu_{z_n}, \sigma_y^2I_{d\times d}), \ n = 1, \dots, N.
\end{align}
Here each $\mu_k$ is a Gaussian centroid and each $z_n$ is a cluster membership indicator.

As exact inference of the posterior $p(\mu, z | y)$ is intractable, we approximate it variationally~\cite{VI_review} with the mean-field family 
\begin{align}
q(\mu, &z) = \prod_{k=1}^{K}q(\mu_{k})
    \prod_{n=1}^{N}q(z_{n}).
\end{align}
Here 
\begin{align}
    q(\mu_{k}) &= \delta\{\mu_k = \hat\mu_k\}, \\
    q(z_{n}) &= \text{Categorical}\left(\hat\pi_n\right),
\end{align}
where $\delta\{\cdot = \hat\mu_k\}$ is the Dirac-delta function.

%

%


We then seek to minimize $\text{KL}(q(\mu, z) \| p(\mu, z | y))$ over the variational parameters $\hat\mu$ and $\hat\pi$. This is equivalent to maximizing the ELBO
\begin{align}
	\sum_{n=1}^N \E_{q(z_n; \pi_n)}\Big[&\log \frac{p(y_n | \hat\mu, z_n) p(z_n)}{q(z_n)}\Big] + \sum_{k=1}^K \log p(\hat\mu_k). \label{eq:gmm_objective}
\end{align}

Note that the expectation over $z_n$ is a summation over $K=10$ categories. Figure \ref{fig:gmm_results} compares the performance of unbiased stochastic gradients produced from REINFORCE$^+$ to the Rao-Blackwellization of REINFORCE$^+$ for optimization of the ELBO in Equation \eqref{eq:gmm_objective}.

Unlike the Bernoulli example, we are also optimizing parameters inside the expectation; specifically, in this case we are jointly optimizing the variational mean parameters $\hat\mu_k$ alongside the $\hat{\pi}_n$. We expect that more quickly learning the latent categories $z_n$ aids the optimization process, since the mean parameters depend on the cluster memberships. 

We initialized the optimization with $K$-means. Figure~\ref{fig:gmm_results} shows that Rao-Blackwellization improves the convergence rate, with faster convergence when more categories are summed. With summing just three categories, we nearly recover the same ELBO trajectory of the exact gradient, which sums all ten categories. We chose $K = 10$ as an example so we can compare against the exact gradient; with larger $K$, computing the exact gradient will become intractable and stochastic methods such as ours will be required.  

We also examine here the computational trade-off. Our Rao-Blackwellized estimator with $k$ categories summed requires $k + 1$ evaluations of the original REINFORCE$^+$ estimator. For a fairer comparison, we also consider the benefits of variance reduction obtained from simple Monte-Carlo sampling, where $k+1$ samples of the REINFORCE$^+$ estimator are averaged at each iteration. In this experiment, Rao-Blackwellization yields better performance than Monte-Carlo averaging. This is because for most observations, memberships are fairly unambiguous and so $q(z)$ is concentrated. This is the regime where our theory suggests significant variance reduction using Rao-Blackwellization. 

\begin{figure}[tb]
    \centering
    \hspace{0.3in}\begin{subfigure}[b]{0.3\textwidth}
        \includegraphics[width=\textwidth]{./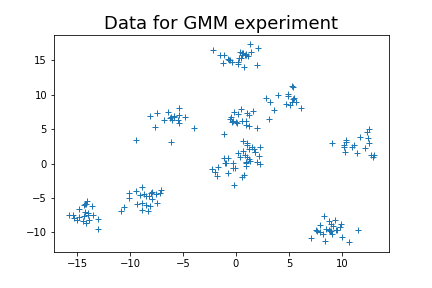}
    \end{subfigure}
    \begin{subfigure}[b]{0.4\textwidth}
        \includegraphics[width=\textwidth]{./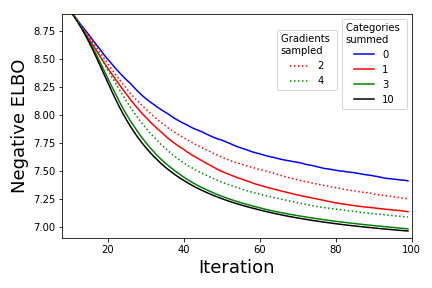}
    \end{subfigure}
    \vspace{-0.2in}
    \caption{Results for Gaussian mixture model experiment. (Top) Simulated data.
    (Bottom) Solid lines display the negative ELBO per iteration using REINFORCE$^+$, for $k$ categories summed. Zero categories summed is the original REINFORCE$^+$ estimator, while 10 categories summed returns the analytic gradient. 
    Dashed lines show performance when $n\in\{2, 4\}$ draws of the REINFORCE$^+$ estimator are averaged at each iteration to reduce variance. Each line is an average over 20 trials from the same initialization.
    }
    \label{fig:gmm_results}
\end{figure}


\subsection{Generative semi-supervised classification}
\label{sec:ss_mnist}
\subsubsection{Semi-supervised models}
The goal of a semi-supervised classification task is to predict labels $y$ from $x$, but where the training set consists of both labeled data $(x, y) \sim \mathcal{D}_L$ and unlabeled data $x \sim \mathcal{D}_{U}$. The approach proposed by Kingma et al.~(2014) uses a variational autoencoder (VAE) whose latent space is joint over a Gaussian variable $z$ and the discrete label $y$. The training objective is to learn a classifier $q_\phi(y | x)$, an inference model $q_\phi(z | x, y)$, and a generative model $p_\theta(x | y, z)$. On labeled data, the variational lower bound is 
\begin{align}
    \log p_\theta&(x, y) \geq \mathcal{L}^{L}(x, y)   \\ 
    &:= \E_{q_\phi(z|x, y)}[\log p_\theta(x | y, z) + \notag \\
    & \qquad \log p_\theta(z) + \log p_\theta(y) -
    \log q_\phi(z | x, y)]
    \label{eq:vi_bound_labeld}
\end{align}
On unlabeled data, the unknown label $y$ is treated as a latent variable and integrated out, 
\begin{align}
    \log p_\theta(x) &\geq \mathcal{L}^{U}(x) \\ 
    & := \E_{q_\phi(z|x, y)q_\phi(y | x)}[\log p_\theta(x | y, z) + \notag \\
    & \qquad \log p_\theta(z) + \log p_\theta(y) - \notag \\
    & \qquad \log q_\phi(z | x, y) - \log q_\phi(y | x)] \\
    &= \E_{q_\phi(y|x)}[\mathcal{L}^L(x, y) - \log q_\phi(y | x)]
    \label{eq:vi_bound_unlabeled}
\end{align}
The full objective to be maximized is 
\begin{align}
    \mathcal{J} = \E_{x \sim \mathcal{D}_U}[\mathcal{L}^{U}(x)] & +   
    \E_{(x, y) \sim \mathcal{D}_L}[\mathcal{L}^{L}(x, y)] \notag \\
    & + \alpha \E_{(x, y) \sim \mathcal{D}_L}[\log q_\phi(y | x)] 
\end{align}
where the third term is added for the classifier $q_\phi(y | x)$ to also train on labeled data. $\alpha$ is a hyperparameter which we set to 1.0 in our experiments. 

We take $z$ to be a continuous random variable with a standard Gaussian prior. Hence, gradients can flow through $z$ using the reparametrization trick. However, $y$ is a discrete label. The original approach proposed by Kingma et al.~(2014) computed the expectation in Equation~\eqref{eq:vi_bound_unlabeled} by exactly summing over the ten categories. However, most images are unambiguous in their classification, so $q_\phi(y | x)$ is often concentrated on just one category. We will show that applying our Rao-Blackwellization procedure with one category summed gives results comparable to computing the the full sum, more quickly. 

\subsubsection{Experimental Results}
We work with the MNIST dataset~\cite{LecunMNIST}. 
We used $50\,000$ MNIST digits in the training set, $10\,000$ digits in the validation set, and $10\,000$ digits in the test set. Among the $50\,000$ MNIST digits in the training set, $5\,000$ were randomly selected to be labeled, and the remaining $45\,000$ were unlabeled. 

To optimize, we Rao-Blackwellized the REINFORCE estimator. We compared against REINFORCE without Rao-Blackwellization; the exact gradient with all 10 categories summed; REINFORCE$^+$; Gumbel-softmax~\cite{jang2017categorical}; NVIL~\cite{mnih2016variational}; and RELAX~\cite{grathwohl2018backpropagation}. 

For all methods, we used performance on the validation set to choose step-sizes and other parameters. See Appendix for details concerning parameters and model architecture. 

\begin{figure}[tb]
    \centering
        \includegraphics[width=0.49\textwidth]{./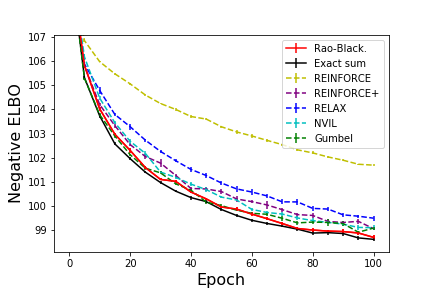}
    \vspace{-0.3in}
    \caption{Results on the semisupervised MNIST task. Plotted is test set negative ELBO evaluated at the MAP label. Paths are averages over 10 runs from the same initialization. Vertical lines are standard errors. Our method (red) is comparable with summing out all ten categories (black). }
    \label{fig:ss_mnist_result}
    \vspace{-0.15in}
\end{figure}

\begin{table}[tb]
\caption{Accuracies and timing results on semi-supervised MNIST classification. Standard errors of test accuracies are over 10 runs of each method. Standard deviations of timing are over the 100 epochs of 10 runs. Training was run on a p3.2xlarge instance on Amazon Web Services. }
\vspace{0.1in}
\begin{tabular}{l|c|c}
Method & test acc. (SE) & secs/epoch (SD)\\\hline 
RB-REINFORCE & 0.965 (0.001) & 17.5 (1.8) \\
Exact sum & 0.966 (0.001) & 31.4 (3.2) \\
REINFORCE & 0.940 (0.002) & 15.7 (1.6) \\
REINFORCE$^+$ & 0.953 (0.001) & 17.2 (1.7) \\
RELAX & 0.966 (0.001) & 29.8 (3.0) \\
NVIL & 0.956 (0.002) &  17.5 (1.8)\\
Gumbel-softmax &  0.954 (0.001) & 16.4 (1.7)
\end{tabular}
\label{tab:ss_mnist_accuracies}
\end{table}



Figure \ref{fig:ss_mnist_result} shows the negative ELBO, $-\mathcal{L}^L(x, y)$ from Equation~\eqref{eq:vi_bound_labeld}, on the test set evaluated at the MAP label as a function of epoch. In this experiment, our Rao-Blackwellization with one category summed (RB-REINFORCE) achieves the same convergence rate as the original approach where all ten categories are analytically summed. Moreover, our method achieves comparable test accuracy, at 97\%. Finally, our method requires about 18 seconds per epoch, compared to 31 seconds per epoch when using the full sum (Table \ref{tab:ss_mnist_accuracies}).

In comparing with other approaches, we clearly improve upon the convergence rate of REINFORCE. We slightly improve on RELAX. On this example, REINFORCE$^+$, NVIL, and Gumbel-softmax also give results comparable to ours. 



\subsection{Moving MNIST}
\label{sec:moving_mnist}
In this section, we use a hard-attention mechanism~\cite{mnihattnmechanism, GregorDRAW} to model non-centered MNIST digits. We choose this problem because, as will be seen, the exact stochastic gradient is intractable due to the large number of categories. However, only a few of the categories will have significant probabilities.

Like the original VAE work \cite{kingma2014auto}, we learn an inference model $q_\phi(z | x)$ and generative model $p_\theta(x | z)$, where $z$ is a low-dimensional, continuous representation of the MNIST digit $x$. Unlike the previous section, we are no longer using the class label. However, we now work with a non-centered MNIST digit, and in order to train the inference and generative models, we must also infer the pixel at which the MNIST digit is centered. 

More precisely, our generative model is as follows.
For each image, we sample a two-vector representing the pixel at which to center the original $28 \times 28$ MNIST image:
\begin{align}
    \ell \sim \mathrm{Categorical}(H \times W).
\end{align}
Here $H$ and $W$ are respectively the height and width, in pixels, of the larger image frame on which the MNIST digit will be placed. We take $H = W = 68$ in our experiments. 

Next, we generate the non-centered MNIST digit as 
\begin{align}
    z &\sim \mathcal{N}(0, I_d), \\
    x_{h,w} | \ell, z & \stackrel{ind}{\sim}
     \text{Bernoulli}(\mu(z)[h - \ell_0, w - \ell_1]),
     \label{eq:moving_mnist_bern_model}. 
\end{align}
 for $h\in\{0, ..., H-1\}$ and $w\in\{0, ..., W-1\}$. Here $\mu$ is a neural network that maps $z\in\mathbb{R}^d$ to a grid of mean parameters $\mu(z)\in \mathbb{R}^{28 \times 28}$. In Equation \eqref{eq:moving_mnist_bern_model}, we take $\mu(z)[a, b] = 0$ if $(a, b)\notin[0, 28]^2$. 

In this way, $x \in \mathbb{R}^{H \times W}$ is a random sample of an image containing a single non-centered MNIST digit on a blank background (Figure \ref{fig:moving_mnist_example}). 

Hence, we need to learn not only the generative model for an MNIST digit, but also the pixel at which the digit is centered. Our two latent variables are $z_n$ and $\ell_n$. We find a variational approximation to the posterior using an approximating family of the form 
\begin{align}
    \ell_n | x_n &\sim \mathrm{Categorical}(\zeta(x_n)), \\
    z_n | x_n, \ell_n &\sim \mathcal{N}(h_\mu(x_n, \ell_n), h_\Sigma(x_n, \ell_n)),
\end{align}
where $\zeta$, $h_\mu$, and $h_\Sigma$ are neural networks. The appendix details the architecture for the neural networks. 

REINFORCE was too high variance to be practical here, so we started with REINFORCE$^+$ and its Rao-Blackwellization. Here, we chose to sum the top five categories. We again compare with NVIL, Gumbel-softmax, and RELAX. For all the methods, we use a validation set to tune step-sizes and other parameters. 



Figure \ref{fig:moving_mnist_elbo} shows the negative ELBO on the test set evaluated at the MAP pixel location as a function of epoch. RELAX converged to a similar ELBO as our method, but did so at a slower rate. While NVIL also converged quickly, it converged to a worse negative ELBO than our method. 

Gumbel-softmax did not appear to converge to a reasonable ELBO. We believe that the bias of this procedure was too high in this application. In particular, because we are constrained to sampling discrete values for the pixel attention, we must use the straight-through version of Gumbel-softmax~\cite{StraightThrough, jang2017categorical}, which suffers from even higher bias. 

Our method is more computationally expensive per epoch than the others (Table~\ref{tab:moving_mnist_timing}). However, the gains in convergence are still substantive: for example, it takes about 44 seconds for our method to reach a negative ELBO of 500, while it takes RELAX about 110 seconds. 

Our method performs best because it is the only one that takes advantage of the fact that only a few digit positions have high probabilities. Summing these positions analytically removes much of the variance. 

\begin{figure}[tb]
    \centering
        \includegraphics[width=0.3\textwidth]{./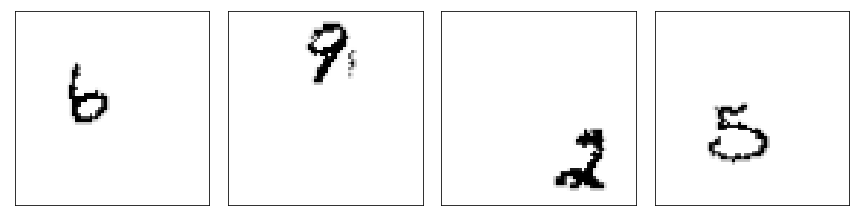}
     \vspace{-0.1in}
    \caption{Examples of non-centered MNIST digits}
    \label{fig:moving_mnist_example}
    \vspace{-0.15in}
\end{figure}

\begin{figure}[tb]
    \vspace{-0.12in}
    \centering
        \includegraphics[width=0.49\textwidth]{./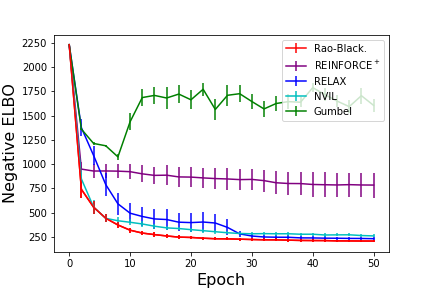}
    \vspace{-0.3in}
    \caption{Results on the moving MNIST task. Plotted is test set negative ELBO evaluated at the MAP pixel location. Paths are averages over 10 runs from the same initialization. Vertical lines are standard errors. Our Rao-Blackwellization (red) with summing out the top five categories exhibits the fastest convergence and reaches a smaller negative ELBO than NVIL and REINFORCE$^+$.}
    \label{fig:moving_mnist_elbo}
    \vspace{-0.2in}
\end{figure}

\begin{table}[tb]
\caption{Timing results on the moving MNIST task. Standard deviations of timing are over the 50 epochs of 10 runs. Training was run on a p3.2xlarge instance on Amazon Web Services. }
\begin{tabular}{l|r}
Method & secs/epoch (SD) \\\hline 
RB-REINFORCE$^+$ &  15.4 (2.3) \\
REINFORCE$^+$ &  \;8.9 (1.3) \\
RELAX & 11.1 (1.6) \\
NVIL & \;9.5 (1.4) \\
Gumbel-softmax & \;8.7 (1.2)
\end{tabular}
\label{tab:moving_mnist_timing}
\end{table}


\section{Discussion}\label{sec:discussion}
Efficient stochastic approximation of the gradient $\nabla_{\eta} \E_{q_{\eta}(z)} [f_\eta(z)]$, where $z$ is discrete, is a basic problem that arises in many probabilistic modelling tasks.
We have presented a general method to reduce the variance of stochastic estimates of this gradient, without changing the bias. Our method is grounded in the classical technique of Rao-Blackwellization. Experiments on synthetic data and two large-scale MNIST modeling problems show the practical benefits of our variance-reduced estimators.

We have focused on the particular setting where $z$ is a univariate discrete random variable, which is relevant for many applications. In other situations, multiple discrete variables will naturally appear in the expectations being optimized. Treating these as a single discrete variable over the Cartesian product of the sample spaces may make such problems amenable to our Rao-Blackwellization approach.

In addition, many multivariate discrete distributions arising in modeling applications will be structured (e.g., the discrete-space latent Markov chain of an HMM). Local expectation gradients \cite{TitsiasLocalExpGrads} reduce high-dimensional expectations over these multivariate discrete distributions to iterated univariate expectations through appropriate conditioning on variable sets. Our technique can then be applied for variance reduction in computing the univariate expectations. This is an avenue of future research.

\nocite{kingma2014semisupervised}
\bibliography{references}
\bibliographystyle{icml2019}

\appendix
\newpage

\section{An example with countably infinite $K$}
We give an example to demonstrate our method when there is a countably infinite number of categories. Consider the N-mixture model, 
\begin{align}
N &\sim \text{Poisson}(\lambda)\\
y_i &\sim \text{Binomial}(N, p) \quad \text{for} \quad i = 1, ..., n, 
\label{eq:binom}
\end{align}
a model used in ecological modeling of species counts~\cite{RoyleNmixtureModel}. 

In our experiment, we take $p$ and $\lambda$ to be known parameters. We want to infer $N$ given data $y_1, ..., y_n$. Since the support of $N$ is the integers greater than or equal to $y_{max}~:=~\max_n~\{y_n\}$, we use a negative binomial distribution shifted by $y_{max}$ to approximate the posterior. Let $\hat r$ and $\hat p$ be the number of failures and the probability of success, respectively, for a negative binomial. We optimize the ELBO, 
\begin{align}
\mathcal{L}(\hat r, \hat p) = E_{q(N; \hat r, \hat p)}[\log p(y | N)p(N) - \log q(N; \hat r, \hat p)]
\end{align}
This expectation is taken over $N$, and is given by an infinite sum. The exact expectation is intractable. However, we have a closed form variational distribution, and for any $\hat r$ and $\hat p$, it is easy to find the integers $N$ where $q(N; \hat r, \hat p)$ places most of its mass. We therefore can apply our Rao-Blackwellization procedure to compute stochastic gradients of the ELBO. 

In our experiment, we take the true $N = 10$ and $p = 0.2$. We drew 1000 data points from Equation~\eqref{eq:binom}. We set our Poisson prior with $\lambda = 10$. 

We found that the REINFORCE estimator was too high variance to be useful in this example, so we start with REINFORCE$^+$. Figure~\ref{fig:n_mixture_elbo_path} compares the REINFORCE$^+$ estimator with its Rao-Blackwellization, using either $k = 1$ or $k = 3$ categories summed. 

\begin{figure}[h!]
    \centering
        \includegraphics[width=0.4\textwidth]{./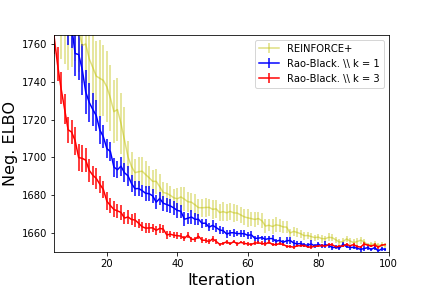}
    \caption{Negative ELBO per iteration in the N-mixture experiment. We compare the REINFORCE$^+$ estimator with its Rao-Blackwellization, using either $k = 1$ or $k = 3$ categories summed. Vertical lines denote standard errors over 10 trials from the same initialization. }
    \label{fig:n_mixture_elbo_path}
\end{figure}

We find that our Rao-Blackwellization improves the convergence rate of the ELBO. This is because our variational distribution eventually concentrates around the true $N$ (Figure~\ref{fig:n_mixture_q}), and only a few categories have significant mass. 
\begin{figure}[h!]
    \centering
        \includegraphics[width=0.4\textwidth]{./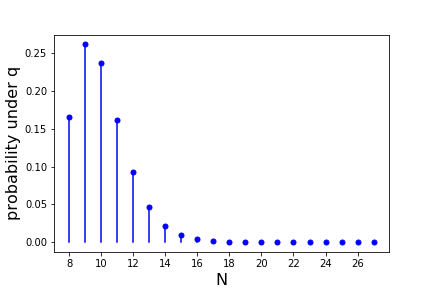}
    \caption{Negative binomial variational distribution $q$ at convergence for the N-mixture experiment. }
    \label{fig:n_mixture_q}
\end{figure}

\section{Experimental details}
Implementations of all methods in our paper as well as code to reproduce our results can be found in the git repository
\url{https://github.com/Runjing-Liu120/RaoBlackwellizedSGD}. 

\subsection{Generative semi-supervised classification}
In this experiment, our classifier $q_\phi(y|x)$ consists of three fully connected hidden layers, each with 256 nodes and ReLU activations. The inference and generative models, $q_\phi(z|x, y)$ and $p_\theta(x|z, y)$, both have one hidden layer with 128 nodes and ReLU activations, similar to the MLPs used in Kingma et al.~(2014). The latent variable $z$ is five dimensional and $q_\phi(z | x)$ is multivariate Gaussian with diagonal covariance.

For all methods, we used performance on a validation set to choose between the possible step-sizes, \{5e-5, 1e-4, 5e-4, 1e-3, 5e-3\}. For Gumbel-softmax, we also chose the annealing rate among \{1e-5, 5e-5, 1e-4, 5e-4\}. For RELAX, the relaxation temperature was chosen adaptively using gradients, while the scaling parameter was set at 1.0. 

The step-size for REINFORCE was chosen to be 1e-4 and the step-size for RELAX was chosen to be 5e-4. The step-size for the remaining methods were chosen to be 1e-3. The annealing rate for Gumbel-softmax was chosen to be 5e-4. 

Optimization was done with Adam~\cite{KigmaADAM2014}, with parameters $\beta_1=0.9$, $\beta_2 = 0.999$. An initialization for $q_\phi(z|x, y)$ and $p_\theta(x|z, y)$ was obtained by first optimizing $\mathcal{L}^L(x, y)$ on the labeled data only. We also initialized $q_\phi(y|x)$ on the labeled data using cross-entropy loss. The results in the paper show the optimization of the semi-supervised ELBO starting from this initialization. 

\subsubsection{Conditional generation results}
Figure~\ref{fig:ss_mnist_generation} displays the conditional generation of MNIST digits obtained after 100 epochs of running our Rao-Blackwellized gradient method. 

\begin{figure}[h!]
    \centering
    \begin{subfigure}[b]{0.2\textwidth}
        \includegraphics[width=\textwidth]{./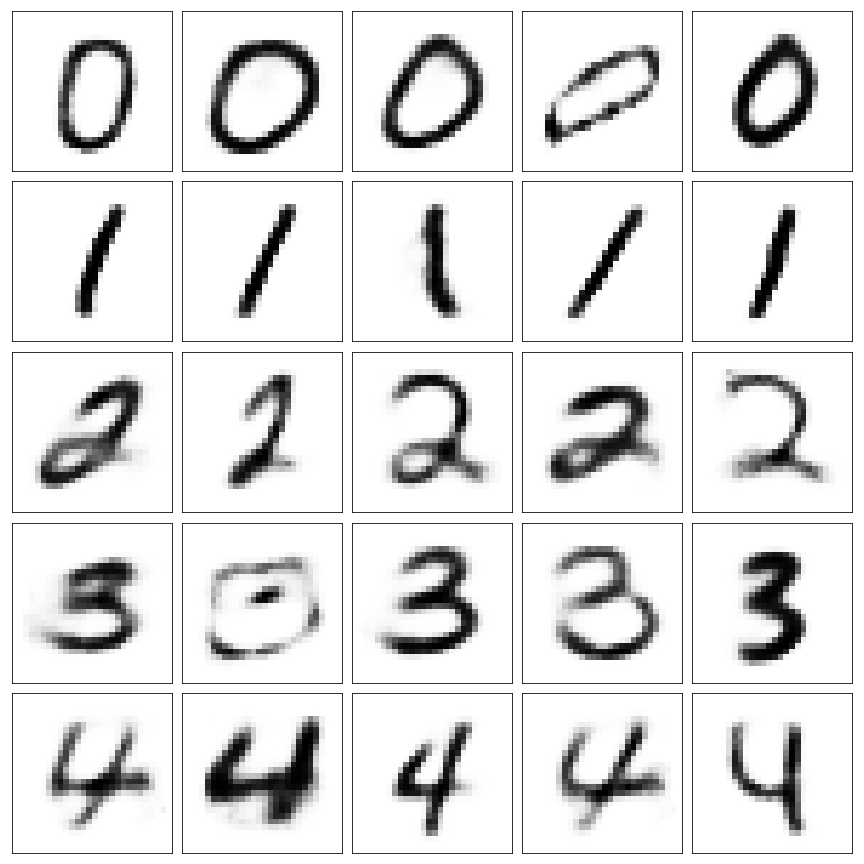}
    \end{subfigure}
    \qquad 
    \begin{subfigure}[b]{0.2\textwidth}
        \includegraphics[width=\textwidth]{./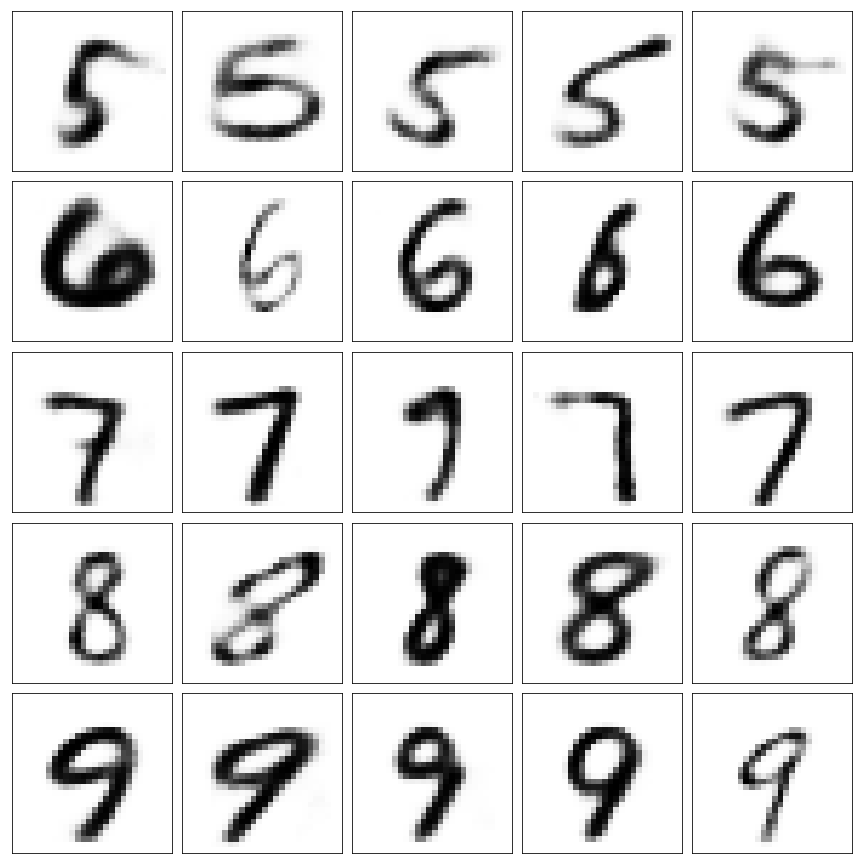}
    \end{subfigure}
    \caption{The conditional generation of MNIST digits. Each row displays five draws from the learned generative model
    $z\sim \mathcal{N}(0, I)$, $x \sim p_\theta(x | y, z)$, for a different digit $y$ in each row. }
    \label{fig:ss_mnist_generation}
\end{figure}

\subsection{Moving MNIST}

For the decoder $p(x | l, z)$ we use one fully connected hidden layer with 256 nodes and tanh activations, similar to the architecture described in Kingma and Welling (2014). Our $z$ is 5 dimensional. 

The attention mechanism $q(l | x)$ contains four convolutional layers, each with 7 output channels and ReLU activations; the final layer is a fully connected layer with a softmax. The encoder network $q(z|x)$ has one fully connected hidden layer with 256 nodes and tanh activations, mirroring the decoder network. 

We again used performance on the validation set to choose between the possible step-sizes and model parameters as described in the section above. The learning rate and annealing rate for Gumbel-sofmax was chosen to be 5e-5 and 5e-4, respectively.
For RELAX, the learning rate was 5e-4. The step-sizes for the remaining procedures were chosen to be 1e-3. We again use Adam~\cite{KigmaADAM2014} for optimization, and we set $\beta_1=0.9$, $\beta_2 = 0.999$. 

\subsubsection{VAE reconstruction}
Figure~\ref{fig:moving_mnist_recon} displays (1) the original non-centered MNIST digit; (2) the reconstruction of the MNIST digits after passing through our attention mechanism and VAE; and (3) the learned pixel locations. 

\begin{figure}[h!]
    \centering
        \includegraphics[width=0.4\textwidth]{./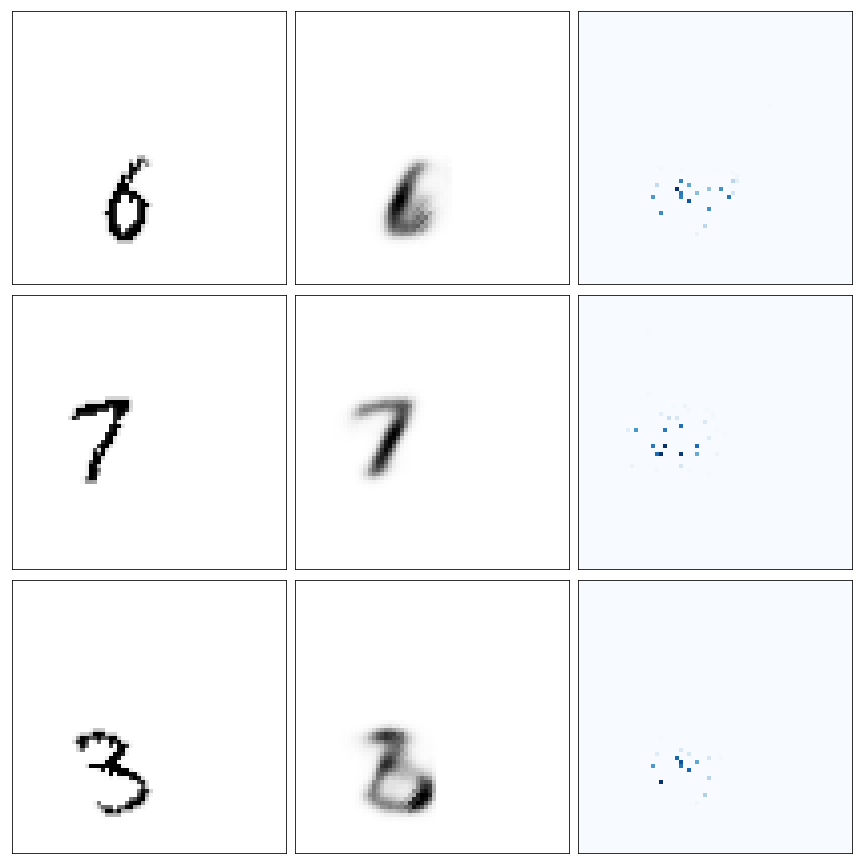}
    \caption{(Left column) The original MNIST digit. (Center column) The reconstructed MNIST digit. (Right column) The learned probability distribution over the grid of pixels. Brighter spots indicate higher probabilities.}
    \label{fig:moving_mnist_recon}
\end{figure}

\end{document}